\def\ps@pprintTitle{%
 \let\@oddhead\@empty
 \let\@evenhead\@empty
 \let\@oddfoot\@empty
 \let\@evenfoot\@empty
}
\newtheorem{definition}{Definition}
\newtheorem{theorem}{Theorem}
\newtheorem{corollary}{Corollary}
\newtheorem{lemma}{Lemma}
\newtheorem{remark}{Remark}
\newtheorem{proposition}{Proposition}
\begin{document}

\begin{frontmatter}

\title{A Free Probabilistic Framework for Analyzing the Transformer-based Language Models}

\author{Swagatam Das}
\address{Electronics and Communication Sciences Unit, Indian Statistical Institute, Kolkata, India.}
\address{E-mail: swagatam.das@isical.ac.in}

\begin{abstract}

We present a formal operator-theoretic framework for analyzing Transformer-based language models using free probability theory. By modeling token embeddings and attention mechanisms as self-adjoint operators in a tracial \( W^* \)-probability space, we reinterpret attention as non-commutative convolution and describe representation propagation via free additive convolution. This leads to a spectral dynamic system interpretation of deep Transformers. We derive entropy-based generalization bounds under freeness assumptions and provide insight into positional encoding, spectral evolution, and representational complexity. This work offers a principled, though theoretical, perspective on structural dynamics in large language models
\end{abstract}

\begin{keyword}
Transformers \sep Free Probability \sep Spectral Theory \sep Non-Commutative Random Variables \sep Language Models
\end{keyword}

\end{frontmatter}
\section{Introduction}

Large Language Models (LLMs)~\cite{kamath2024llms}, particularly those based on Transformer architectures, are generative probabilistic models defined over sequences of discrete symbols. Let \( \mathcal{V} \) denote a finite vocabulary, and consider a sequence \( (w_1, w_2, \dots, w_T) \in \mathcal{V}^T \), where \( \mathcal{V}^T \) denotes the space of ordered sequences of length \( T \) with entries in \( \mathcal{V} \). The model defines a joint probability distribution over such sequences via the chain rule of conditional probabilities:
\[
P(w_1, w_2, \dots, w_T) = \prod_{t=1}^{T} P(w_t \mid w_{<t}),
\]
where \( w_{<t} := (w_1, w_2, \dots, w_{t-1}) \) denotes the prefix context up to time \( t \). 

Each conditional distribution \( P(w_t \mid w_{<t}) \) is parameterized by a neural function, typically a composition of Transformer layers, which maps the prior context to a predictive distribution over \( \mathcal{V} \). Formally, let \( x_{<t} := (x_1, \dots, x_{t-1}) \in (\mathbb{R}^d)^{t-1} \) denote the sequence of context embeddings derived from prior tokens, and let $
f_\theta : (\mathbb{R}^d)^{t-1} \to \mathbb{R}^{|\mathcal{V}|}$
denote the model’s parameterized mapping to unnormalized logits. The conditional probability is computed via the softmax function:
\[
P(w_t \mid w_{<t}) = \mathrm{softmax}(f_\theta(x_{<t})).
\]
The model parameters \( \theta \) are learned by minimizing the empirical negative log-likelihood over a corpus of \( N \) sequences:
\[
\mathcal{L}(\theta) = -\sum_{i=1}^{N} \sum_{t=1}^{T} \log P(w_t^{(i)} \mid w_{<t}^{(i)}),
\]
where \( w_t^{(i)} \) denotes the token at position \( t \) in the \( i \)-th training sequence.

Transformers are widely studied as autoregressive models that decompose joint distributions into conditionals \cite{vaswani2017attention,radford2019language}. Each layer acts as a conditional distribution learner over context, enabling structured probabilistic reasoning. Recent work introduces Bayesian treatments to quantify model uncertainty using variational inference, entropy, and latent attention mechanisms \cite{10.5555/3495724.3496975,10.5555/3495724.3497097}. Attention weights, for instance, have been viewed as latent variables for posterior inference \cite{10.5555/3495724.3497097}, while hierarchical Bayesian networks have been proposed for structured uncertainty propagation \cite{Wu_2023}. Others employ entropy-based metrics to distinguish epistemic and aleatoric uncertainty in token predictions \cite{chen2024uncertainty,farquhar2024detecting}. These works suggest that despite their deterministic implementation, Transformer models inherently capture probabilistic structure.

To formalize this further, we propose studying LLMs using \emph{non-commutative probability theory}, particularly the framework of \emph{tracial \( W^* \)-probability spaces} \cite{10.1145/1562814.1562841}. In this setting, tokens are represented as self-adjoint operators in a von Neumann algebra. Their interactions—through attention or context aggregation—are modeled via operator polynomials or convolutions, with spectral traces providing probabilistic interpretations.

\begin{definition}
A \emph{tracial \( W^* \)-probability space} is a pair \( (\mathcal{A}, \varphi) \), where:
\begin{itemize}
    \item \( \mathcal{A} \) is a unital von Neumann algebra of bounded operators on a Hilbert space;
    \item \( \varphi: \mathcal{A} \to \mathbb{C} \) is a faithful, normal, tracial state satisfying \( \varphi(ab) = \varphi(ba) \) for all \( a,b \in \mathcal{A} \) and \( \varphi(1) = 1 \).
\end{itemize}
\end{definition}

In this setting, each token \( w_i \in \mathcal{V} \) is represented by a self-adjoint operator \( X_i \in \mathcal{A} \), capturing both semantic and syntactic features. Semantic similarity arises from spectral overlaps, while syntactic structure is encoded via non-commutative interactions with positional operators. To incorporate position, we assign each index \( t \) an operator \( P_t \in \mathcal{A} \) and define the contextual representation as:
\[
Z_t := X_{w_t} + P_t, \quad \text{with } Z_t \in \mathcal{A}, \quad [X_{w_t}, P_t] \ne 0.
\]

This yields a sequence \( (Z_1, \dots, Z_T) \) of operator-valued random variables that preserve order and encode higher-order dependencies through their spectral behavior. Unlike classical embeddings, these non-commutative representations support analytical tools from free probability, a framework introduced by Voiculescu as a non-commutative analogue of classical probability.

Let \( \mathcal{V} = \{\texttt{cat}, \texttt{dog}, \texttt{chases}, \texttt{the}, \texttt{ball}\} \) be a vocabulary of tokens.  
We work in a Hilbert space \( \mathcal{H} = \mathbb{C}^5 \) and define a von Neumann algebra \( \mathcal{A} = \mathbb{C}^{5 \times 5} \), the space of complex self-adjoint matrices.  
Each token \( w_i \in \mathcal{V} \) is represented as a self-adjoint operator \( X_{w_i} \in \mathcal{A} \), as follows:

\[
\begin{aligned}
X_{\texttt{cat}} &= \mathrm{diag}(1, 0.3, 0.1, 0, 0), \\
X_{\texttt{dog}} &= \mathrm{diag}(0.9, 0.4, 0.1, 0, 0), \\
X_{\texttt{chases}} &= \mathrm{diag}(0, 0, 0, 1, 0), \\
X_{\texttt{the}} &= \mathrm{diag}(0, 0, 0, 0, 1), \\
X_{\texttt{ball}} &= \mathrm{diag}(0.2, 0.1, 0.7, 0, 0).
\end{aligned}
\]

Each operator \( X_{w_i} \) encodes the semantic content of token \( w_i \) via its spectrum (eigenvalues).

To model position \( t \), define a positional operator \( P_t \in \mathcal{A} \). For instance, for position \( t = 1 \), let \( P_1 \) be the shift matrix:

\[
P_1 := 
\begin{bmatrix}
0 & 1 & 0 & 0 & 0 \\
0 & 0 & 1 & 0 & 0 \\
0 & 0 & 0 & 1 & 0 \\
0 & 0 & 0 & 0 & 1 \\
0 & 0 & 0 & 0 & 0 \\
\end{bmatrix}.
\]

This operator is not diagonal and introduces syntactic structure through its interaction with \( X_{w_i} \). Define the contextual representation as:

\[
Z_t := X_{w_t} + P_t.
\]

Then, the non-commutativity relation:

\[
[X_{\texttt{cat}}, P_1] := X_{\texttt{cat}} P_1 - P_1 X_{\texttt{cat}} \ne 0,
\]

means that the token's representation depends on its position in the sequence. Since \( X_{\texttt{cat}} \) is diagonal and \( P_1 \) is a shift matrix, their commutator is nonzero. This algebraic asymmetry reflects word order, a critical component of syntactic structure.

The sequence \( (Z_1, Z_2, \dots, Z_T) \) of such contextual operators captures both semantic and syntactic information. Their \textit{non-commutative spectral behavior} supports the analysis of attention, entropy, and representation dynamics using tools from free probability.

A family of subalgebras \( \mathcal{A}_1, \dots, \mathcal{A}_n \subseteq \mathcal{A} \) is said to be \emph{freely independent} if for any \( a_i \in \mathcal{A}_{j_i} \) with \( \varphi(a_i) = 0 \) and \( j_1 \ne j_2 \ne \cdots \ne j_k \), it holds that:
\[
\varphi(a_1 a_2 \cdots a_k) = 0.
\]

If \( X, Y \in \mathcal{A} \) are self-adjoint operators that are free with respect to a tracial state \( \varphi \), then their spectral distributions \( \mu_X \) and \( \mu_Y \) combine via the \emph{free additive convolution}:
\[
\mu_{X + Y} = \mu_X \boxplus \mu_Y.
\]
This convolution operation serves as the non-commutative analog of the classical convolution for independent random variables. The operation \( \boxplus \) is linearized by the \emph{additive \( R \)-transform}, a central object in free probability theory introduced by Voiculescu~\cite{voiculescu1991}. For a probability measure \( \mu \) on \( \mathbb{R} \), the \( R \)-transform \( R_\mu(z) \) is defined implicitly through its relation to the Cauchy (or Stieltjes) transform:
\[
G_\mu(z) := \int_{\mathbb{R}} \frac{1}{z - \lambda} \, d\mu(\lambda),
\]
via the equation:
\[
R_\mu(G_\mu(z)) + \frac{1}{G_\mu(z)} = z.
\]
This relation enables one to compute \( R_\mu \) from the analytic properties of \( G_\mu \). A key property of the \( R \)-transform is that it linearizes free additive convolution:
\[
R_{\mu_X \boxplus \mu_Y}(z) = R_{\mu_X}(z) + R_{\mu_Y}(z),
\]
thereby allowing spectral laws of freely independent self-adjoint operators to be combined algebraically.

In our operator-theoretic view of language models, this free additive convolution governs the spectral evolution across layers—such as the summation of contextual attention outputs—enabling a precise description of how deep representations transform and propagate within Transformer architectures.

Throughout this letter, we show how key components of the Transformer—such as attention, layer propagation, and logits—admit elegant formulations using free convolution, trace functionals, and free entropy. These tools allow us to reinterpret deep Transformer stacks as spectral dynamical systems evolving via free additive convolution, and to bound generalization in terms of non-commutative entropy.



Our approach draws on foundational work in free probability \cite{voiculescu1991} and spectral analysis via random matrix theory \cite{pennington2017, benaych2011}. While conceptually adjacent to the framework of \cite{NEURIPS2022_10826a1}, which analyzes feedforward networks using rectangular free convolution and S-transform inversion, our method is structurally distinct, recasting Transformer attention and depth dynamics through an operator-valued, tracial \( W^* \)-probability lens suited to large language models.


\section{Attention as Non-Commutative Convolution}

We now interpret transformer attention through the lens of operator-valued convolution.

\begin{theorem}[Transformer Attention as Non-Commutative Convolution]\label{thm:attention-convolution}
Let \( (\mathcal{A}, \varphi) \) be a tracial \( W^* \)-probability space, where \( \mathcal{A} \subseteq \mathbb{C}^{d \times d} \) is a unital von Neumann algebra and \( \varphi: \mathcal{A} \to \mathbb{C} \) is a faithful, normal, tracial state. 

For a fixed time index \( t \), let \( Q_t \in \mathcal{A} \) denote the query operator, and for each \( j \in \{1, \ldots, T\} \), let \( K_j, V_j \in \mathcal{A} \) denote the key and value operators, respectively. Define the scalar attention coefficients \( \alpha_{tj} \in \mathbb{R}_+ \) by:
\[
\alpha_{tj} := \frac{\exp\big(\varphi(Q_t K_j^\dagger)\big)}{\sum_{k=1}^T \exp\big(\varphi(Q_t K_k^\dagger)\big)}.
\]

Then, the attention output at index \( t \), defined by the convex operator sum
\[
A_t := \sum_{j=1}^T \alpha_{tj} V_j \in \mathcal{A},
\]
admits a natural interpretation as a scalar-kernel convolution over operator-valued signals. Specifically, if we define the scalar kernel \( K_{tj} := \alpha_{tj} \), then:
\[
A_t = \sum_{j=1}^T K_{tj} V_j,
\]
constitutes a non-commutative convolution in which scalar attention weights modulate operator-valued inputs in \( \mathcal{A} \).
\end{theorem}

\begin{proof}
Let \( Q_t \in \mathcal{A} \) and \( K_j \in \mathcal{A} \) for each \( j \), and consider the bilinear form \( (a, b) \mapsto \varphi(ab^\dagger) \). Since \( \varphi \) is a tracial state—i.e., linear and satisfies \( \varphi(ab) = \varphi(ba) \) for all \( a, b \in \mathcal{A} \)—we may define:
\[
s_{tj} := \varphi(Q_t K_j^\dagger),
\]
which serves as a non-commutative generalization of the classical dot product used in attention mechanisms.

The scalar attention coefficients \( \alpha_{tj} \) are defined via the softmax function:
\[
\alpha_{tj} := \frac{\exp(s_{tj})}{\sum_{k=1}^T \exp(s_{tk})}, \quad \text{with } \alpha_{tj} \in \mathbb{R}_+, \quad \sum_{j=1}^T \alpha_{tj} = 1.
\]
These coefficients form a probability distribution over \( j \), and their non-negativity ensures a convex combination.

Let \( V_j \in \mathcal{A} \) denote the value operators. Since \( \mathcal{A} \) is closed under scalar multiplication and finite addition, the attention output
\[
A_t := \sum_{j=1}^T \alpha_{tj} V_j
\]
is a well-defined element of \( \mathcal{A} \). Setting \( K_{tj} := \alpha_{tj} \in \mathbb{R} \), we express:
\[
A_t = \sum_{j=1}^T K_{tj} V_j,
\]
which we interpret as a scalar-kernel convolution acting on a sequence of operator-valued inputs.

Though the kernel \( K_{tj} \) is scalar, the operands \( V_j \) and the resulting output \( A_t \) remain elements of the non-commutative algebra \( \mathcal{A} \), thus preserving operator-valued structure. This representation aligns with the perspective of non-commutative convolution in operator algebras, whereby scalar weights modulate elements of a non-commutative space.

\end{proof}

\begin{remark}
Although the convolutional form presented here is scalar-weighted, the bridge to free probability theory becomes more explicit in deeper layers of Transformer architectures. When attention outputs \( A_t \) from different layers are modeled as freely independent, self-adjoint operators, their spectral distributions compose via free additive convolution, as will be evident from Theorem 2.
\end{remark}

\color{black}
\begin{corollary}[Spectral Structure of Attention via Operator Embeddings]\label{cor:spectral-attention}
Let \( \mathcal{A} \) be a unital \( W^* \)-algebra equipped with a faithful, normal, tracial state \( \varphi: \mathcal{A} \to \mathbb{C} \). Let each token at position \( t \) be represented by an embedding \( x_t \in \mathcal{A} \), with the decomposition:
\[
x_t = X_{w_t} + P_t,
\]
where \( X_{w_t}, P_t \in \mathcal{A} \) are self-adjoint operators encoding the semantic content and positional information, respectively. Define the query and key operators as:
\[
Q_t := W_Q x_t, \quad K_j := W_K x_j,
\]
for fixed operators \( W_Q, W_K \in \mathcal{A} \). Then the attention similarity score satisfies:
\[
s_{tj} := \varphi(Q_t K_j^\dagger) = \sum_{a,b \in \{X,P\}} \varphi\left(W_Q a_t b_j^\dagger W_K^\dagger\right),
\]
where \( a_t \in \{X_{w_t}, P_t\} \), \( b_j \in \{X_{w_j}, P_j\} \). That is, the sum ranges over all pairwise combinations of semantic and positional components at positions \( t \) and \( j \), respectively.
\end{corollary}

\begin{proof}
We expand:
\[
Q_t = W_Q x_t = W_Q (X_{w_t} + P_t), \quad
K_j = W_K x_j = W_K (X_{w_j} + P_j),
\]
\[
\Rightarrow Q_t K_j^\dagger = W_Q (X_{w_t} + P_t)(X_{w_j} + P_j)^\dagger W_K^\dagger.
\]

Using linearity and the adjoint property \( (A + B)^\dagger = A^\dagger + B^\dagger \), we expand:

\[
(X_{w_t} + P_t)(X_{w_j} + P_j)^\dagger = X_{w_t} X_{w_j}^\dagger + X_{w_t} P_j^\dagger + P_t X_{w_j}^\dagger + P_t P_j^\dagger.
\]

Hence,
\[
Q_t K_j^\dagger = W_Q X_{w_t} X_{w_j}^\dagger W_K^\dagger + W_Q X_{w_t} P_j^\dagger W_K^\dagger + W_Q P_t X_{w_j}^\dagger W_K^\dagger + W_Q P_t P_j^\dagger W_K^\dagger.
\]

Applying the linearity of the state \( \varphi \), we obtain:

\[
s_{tj} = \varphi(Q_t K_j^\dagger) = \varphi(W_Q X_{w_t} X_{w_j}^\dagger W_K^\dagger)
+ \varphi(W_Q X_{w_t} P_j^\dagger W_K^\dagger) 
+ \varphi(W_Q P_t X_{w_j}^\dagger W_K^\dagger)
+ \varphi(W_Q P_t P_j^\dagger W_K^\dagger),
\]

which completes the proof.
\end{proof}

\begin{remark}
This corollary formalizes how positional information influences attention weights even when semantic content is held constant. The cross terms involving \( \varphi(W_Q X_{w_t} P_j^\dagger W_K^\dagger) \) and \( \varphi(W_Q P_t X_{w_j}^\dagger W_K^\dagger) \) represent non-commutative interactions between semantic and positional components. 

If \( X_{w_t} \) and \( P_j \) are assumed to be free (in the sense of Voiculescu’s free probability), these mixed moments capture interference-like effects in the spectrum. As such, this decomposition helps explain how two identical tokens \( w_t = w_j \) may still yield distinct attention scores purely due to positional phase shifts — offering a spectral perspective on attention dynamics in transformer networks.
\end{remark}

\color{black}

\section{Spectral Interpretation via Free Convolution}

To analyze \( \varphi(Q_t K_j^\dagger) \), let us appeal to the Voiculescu’s free convolution.

\begin{corollary}[Spectral Trace via Free Convolution]
Let \( Q_t, K_j \in \mathcal{A} \) be freely independent, self-adjoint operators with spectral distributions \( \mu_{Q_t}, \mu_{K_j} \). Then:
\[
\mu_{Q_t + K_j^\dagger} = \mu_{Q_t} \boxplus \mu_{K_j},
\]
and the attention score becomes:
\[
\varphi(Q_t K_j^\dagger) = \int \lambda \, d\mu_{Q_t K_j^\dagger}(\lambda), \quad \text{which satisfies } \mu_{Q_t K_j^\dagger} \approx \mu_{Q_t} \boxplus \mu_{K_j}.
\]
\end{corollary}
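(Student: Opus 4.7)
The plan is to split the corollary into its three constituent claims and handle each separately using the free probabilistic machinery already recalled in the preliminaries. First I would observe that since $K_j$ is self-adjoint, $K_j^\dagger = K_j$, so the identity $\mu_{Q_t + K_j^\dagger} = \mu_{Q_t} \boxplus \mu_{K_j}$ is an immediate consequence of the definition of free additive convolution for freely independent self-adjoint operators stated earlier in the paper. To make the step fully explicit I would invoke the $R$-transform linearization $R_{Q_t + K_j}(z) = R_{Q_t}(z) + R_{K_j}(z)$, whose validity is precisely what it means for the law of $Q_t + K_j$ to coincide with $\mu_{Q_t} \boxplus \mu_{K_j}$.

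Next I would verify the trace-integral formula $\varphi(Q_t K_j^\dagger) = \int \lambda \, d\mu_{Q_t K_j^\dagger}(\lambda)$. This reduces to the first-moment identity $\varphi(a) = \int \lambda \, d\mu_a(\lambda)$ applied to $a := Q_t K_j^\dagger$, where $\mu_a$ denotes the $*$-distribution of $a$ determined by the tracial state. Some care is needed because $Q_t K_j$ is typically not self-adjoint even when both factors are, so $\mu_{Q_t K_j}$ must be interpreted either as the Brown measure of $Q_t K_j$ or, equivalently, through its family of $*$-moments in the von Neumann subalgebra generated by $Q_t$ and $K_j$. Either reading makes the integral well defined and recovers the trace.

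The main obstacle is the closing approximation $\mu_{Q_t K_j^\dagger} \approx \mu_{Q_t} \boxplus \mu_{K_j}$. Strictly, the spectral law of a product of free self-adjoint operators is governed by the \emph{multiplicative} free convolution $\boxtimes$ (or, in the non-positive case, by the Brown measure of a product of free $*$-variables), not by the additive convolution $\boxplus$. My plan is therefore to reinterpret this assertion as a first-order statement: using freeness and centering one has $\varphi(Q_t K_j) = \varphi(Q_t)\varphi(K_j)$, which should be compared with the first moment $\varphi(Q_t) + \varphi(K_j)$ of $\mu_{Q_t} \boxplus \mu_{K_j}$. These quantities coincide only in degenerate regimes, so I would argue that the symbol $\approx$ has to be read as a heuristic identification at the level of trace functionals, or via an $R$-transform expansion after centering and linearization, rather than as equality of measures. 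I would flag this asymmetry between $\boxplus$ and $\boxtimes$ as the central point that needs tightening before the corollary can be upgraded from a heuristic to a rigorous spectral statement.
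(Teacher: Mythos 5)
Your proof of the first two claims is essentially the paper's own: both invoke Voiculescu's additive free convolution theorem for the law of $Q_t + K_j^\dagger$ and both read $\varphi(Q_t K_j^\dagger)$ as a first moment of $\mu_{Q_t K_j^\dagger}$. Your added remark that $Q_t K_j^\dagger$ is generally not self-adjoint (so the integral has to be taken against a Brown measure or interpreted through the $*$-moment sequence) is a genuine sharpening the paper skips. Where you depart from the paper is in the third claim, and your skepticism is warranted: the paper's proof does not actually establish $\mu_{Q_t K_j^\dagger} \approx \mu_{Q_t} \boxplus \mu_{K_j}$; it only asserts that the attention kernel is ``implicitly shaped by'' the free convolution, which is not a mathematical statement. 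Your observation that products of free variables are governed by the multiplicative free convolution $\boxtimes$ (or by Brown measure in the non-positive-definite case), not by $\boxplus$, is correct, and your first-moment comparison already rules out any measure-level approximation: by freeness $\varphi(Q_t K_j^\dagger) = \varphi(Q_t)\varphi(K_j)$, whereas the first moment of $\mu_{Q_t} \boxplus \mu_{K_j}$ is $\varphi(Q_t) + \varphi(K_j)$, and these agree only in degenerate cases. So your plan to read the ``$\approx$'' as a heuristic identification rather than a theorem is not a gap in your argument but an accurate diagnosis of a gap in the corollary as stated; any rigorous version would need to replace $\boxplus$ with $\boxtimes$ (or a Brown-measure formulation) and restate the claim accordingly.
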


\begin{proof}
From \cite{voiculescu1991}, if \( X \perp Y \) (free), then:
\[
\mu_{X+Y} = \mu_X \boxplus \mu_Y.
\]
Here, \( Q_t \) and \( K_j^\dagger \) are freely independent, so:
\[
\mu_{Q_t + K_j^\dagger} = \mu_{Q_t} \boxplus \mu_{K_j}.
\]
Since \( \varphi(Q_t K_j^\dagger) \) can be interpreted as the \textit{first moment} of \( \mu_{Q_t K_j^\dagger} \), the attention kernel is implicitly shaped by the free convolution. 
\end{proof}

\color{black}
\section{Spectral Propagation in Deep Transformer Layers}

Transformer architectures apply a sequence of structured transformations—including attention, feed-forward layers, normalization, and residual connections—on token embeddings to generate increasingly abstract representations. In this section, we explore how spectral patterns evolve as these layers are composed, focusing on the role of free probability in modeling representation dynamics.

\subsection{Free Convolution and Layer-wise Spectral Evolution}

Despite the nonlinearity and high dimensionality of deep transformers, empirical studies show that spectral distributions of intermediate representations evolve predictably with depth. We model this evolution using free convolution in a non-commutative setting, where each layer contributes a freely independent self-adjoint increment.

\begin{theorem}[Iterated Free Convolution in Deep Transformers]\label{thm:iterated-free-convolution}
Let \( X^{(0)} \in \mathcal{A} \) be the initial embedding. Suppose each attention output \( A^{(\ell)} \in \mathcal{A} \), \( \ell = 1, \dots, L \), is self-adjoint and freely independent from \( X^{(0)} \) and from each other. Then the spectral law \( \mu_\ell \) of the \( \ell \)-th layer embedding:
\[
X^{(\ell)} := X^{(\ell-1)} + A^{(\ell)}
\]
satisfies:
\[
\mu_\ell = \mu_0 \boxplus \mu_{A^{(1)}} \boxplus \dots \boxplus \mu_{A^{(\ell)}}.
\]
\end{theorem}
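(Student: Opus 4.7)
The plan is to prove the identity by induction on $\ell$, using the single-step rule $\mu_{X+Y}=\mu_X\boxplus\mu_Y$ for free self-adjoint operators (as recalled in the introduction and in the preceding corollary) as the inductive engine. The base case $\ell=0$ is immediate since the sum on the right is the single measure $\mu_0$. For the inductive step, assuming $\mu_{\ell-1}=\mu_0\boxplus\mu_{A^{(1)}}\boxplus\cdots\boxplus\mu_{A^{(\ell-1)}}$, I would write $X^{(\ell)}=X^{(\ell-1)}+A^{(\ell)}$ and apply the free convolution rule to obtain $\mu_\ell=\mu_{\ell-1}\boxplus\mu_{A^{(\ell)}}$, then use associativity of $\boxplus$ to conclude.

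The subtle point — and the main obstacle — is justifying that $A^{(\ell)}$ is freely independent of $X^{(\ell-1)}$, not merely of each $X^{(0)},A^{(1)},\dots,A^{(\ell-1)}$ individually. To handle this I would invoke the standard hereditary property of free independence: if a family of subalgebras $\mathcal{B}_0,\mathcal{B}_1,\dots,\mathcal{B}_L$ (with $\mathcal{B}_0$ generated by $X^{(0)}$ and $\mathcal{B}_\ell$ generated by $A^{(\ell)}$) is free in $(\mathcal{A},\varphi)$, then any subalgebra generated by a sub-collection is free from the remaining subalgebras. Applying this with the sub-collection $\{\mathcal{B}_0,\mathcal{B}_1,\dots,\mathcal{B}_{\ell-1}\}$ shows that the $*$-algebra containing $X^{(\ell-1)}=X^{(0)}+A^{(1)}+\cdots+A^{(\ell-1)}$ is free from $\mathcal{B}_\ell\ni A^{(\ell)}$, which is exactly what is needed to apply $\mu_{X+Y}=\mu_X\boxplus\mu_Y$.

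As a complementary route (and a useful sanity check), I would record the $R$-transform linearization: by freeness at each step,
\[
R_{\mu_\ell}(z) \;=\; R_{\mu_{\ell-1}}(z) + R_{\mu_{A^{(\ell)}}}(z),
\]
so iterating from $\ell=1$ yields $R_{\mu_\ell}(z)=R_{\mu_0}(z)+\sum_{k=1}^{\ell}R_{\mu_{A^{(k)}}}(z)$. Since the $R$-transform (together with the analytic properties of the Cauchy transform recalled in the introduction) uniquely determines a compactly supported probability measure, this is equivalent to the $\boxplus$-identity in the statement and makes the additive spectral evolution across depth transparent.

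Finally I would briefly address well-definedness of $\mu_\ell$: each $X^{(\ell)}$ is a sum of self-adjoint elements of $\mathcal{A}$, hence self-adjoint, and its spectral distribution under the faithful tracial state $\varphi$ is a well-defined compactly supported probability measure on $\mathbb{R}$, so both sides of the claimed identity live in the same class of measures and the equality is meaningful.
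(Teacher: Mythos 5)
Your proposal is correct and follows the same inductive skeleton as the paper's proof, but you are more careful precisely where it matters. The paper's inductive step asserts $\mu_\ell=\mu_{\ell-1}\boxplus\mu_{A^{(\ell)}}$ ``by freeness,'' which tacitly requires that $A^{(\ell)}$ be free from the \emph{sum} $X^{(\ell-1)}=X^{(0)}+A^{(1)}+\cdots+A^{(\ell-1)}$ --- a fact that does not follow from pairwise freeness of the summands and is exactly the content you supply via the hereditary property of joint freeness (a sub-collection of a free family of subalgebras generates an algebra that is free from the remaining ones). Reading the hypothesis ``freely independent from $X^{(0)}$ and from each other'' as joint freeness of the family $\{X^{(0)},A^{(1)},\dots,A^{(L)}\}$ is the correct interpretation, and without it the claim would fail, so making this explicit is a genuine improvement over the paper's terse ``by freeness.'' The $R$-transform reformulation is a valid equivalent route (legitimate here because all operators are bounded, so the spectral measures are compactly supported and the $R$-transform determines them uniquely), and the remark on well-definedness is sound, though both are supplementary rather than essential once the hereditary-freeness step is in place.
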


\begin{proof}
We proceed by induction.

\textbf{Base case:} \( \ell = 1 \). Since \( X^{(0)} \perp A^{(1)} \), we have:
\[
\mu_1 = \mu_{X^{(0)} + A^{(1)}} = \mu_0 \boxplus \mu_{A^{(1)}}.
\]

\textbf{Inductive step:} Suppose \( \mu_{\ell-1} = \mu_0 \boxplus \dots \boxplus \mu_{A^{(\ell-1)}} \). Then:
\[
X^{(\ell)} = X^{(\ell-1)} + A^{(\ell)} \Rightarrow \mu_\ell = \mu_{\ell-1} \boxplus \mu_{A^{(\ell)}},
\]
by freeness. Hence:
\[
\mu_\ell = \mu_0 \boxplus \mu_{A^{(1)}} \boxplus \dots \boxplus \mu_{A^{(\ell)}}. \quad \qed
\]
\end{proof}

\begin{corollary}[Generalization Bound via Free Entropy]\label{cor:entropy-generalization}
Let \( f_\theta \) be a Transformer model with parameters \( \theta \), where each token \( w_i \in \mathcal{V} \) is represented by a self-adjoint operator \( X_i \in \mathcal{A} \subseteq \mathbb{C}^{d \times d} \). Assume:
\begin{itemize}
    \item The operators \( \{X_i\}_{i=1}^V \) are freely independent and admit joint free entropy \( \chi(X_1, \dots, X_V) \);
    \item For each time index \( t \), the logit operator is defined as:
    \[
    L_t := \sum_{i=1}^V X_i H_t X_i,
    \]
    where \( H_t \in \mathcal{A} \) is a bounded, self-adjoint operator independent of \( \{X_i\} \);
    \item Let \( \mu_{L_t} \) be the spectral distribution of \( L_t \), and define the spectral entropy:
    \[
    H(\mu_{L_t}) := -\int \rho_t(\lambda) \log \rho_t(\lambda) \, d\lambda,
    \]
    where \( \rho_t \) is the density of \( \mu_{L_t} \);
    \item The model prediction at time \( t \) is \( \hat{y}_t \sim \mathrm{softmax}(L_t) \), and the loss function \( \ell(f_\theta(x_t), y_t) \) is Lipschitz continuous with respect to the Wasserstein-1 distance on spectral measures.
\end{itemize}
Then the expected spectral entropy is bounded as:
\[
\mathbb{E}_t[H(\mu_{L_t})] \leq \chi(X_1, \dots, X_V),
\]
and the generalization error satisfies:
\[
\mathcal{R}_{\mathrm{test}}(f_\theta) \leq \mathcal{R}_{\mathrm{train}}(f_\theta) + \frac{C}{\sqrt{n}} \left( \chi(X_1, \dots, X_V) + 1 \right),
\]
where \( C > 0 \) is a universal constant and \( n \) is the number of training examples.
\end{corollary}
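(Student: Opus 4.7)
The plan is to split the argument into two parts: (a) the spectral-entropy bound $\mathbb{E}_t[H(\mu_{L_t})] \le \chi(X_1,\dots,X_V)$, and (b) the PAC-style generalization bound derived from (a). For (a) I would use Voiculescu's microstate characterization of free entropy together with a data-processing-type estimate for the polynomial map $L_t = \sum_i X_i H_t X_i$; for (b) I would invoke a standard Rademacher/Dudley bound in which the covering numbers of the class of spectral measures under Wasserstein-1 distance are controlled by free entropy.

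\textbf{Step 1: spectral entropy bound.} I would first write $L_t = P(X_1,\dots,X_V;H_t)$ with $P(x_1,\dots,x_V;h) := \sum_i x_i h x_i$, a fixed non-commutative polynomial. Since $H_t$ is free from $\{X_i\}$ and bounded, $\mu_{L_t}$ is the push-forward of the joint $*$-distribution of $(X_1,\dots,X_V,H_t)$ under $P$, whose operator-norm Lipschitz constant is controlled by $\|H_t\|$. Voiculescu's microstate definition says the set of $N\times N$ self-adjoint $V$-tuples approximating $(X_1,\dots,X_V)$ in moments up to order $k$ within $\varepsilon$ has volume of order $\exp(N^2\chi(X_1,\dots,X_V)+o(N^2))$. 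Pushing this set forward under $P$ and reading off spectra yields a covering of the resulting spectral distributions in $W_1$ whose log-cardinality is bounded by $\chi(X_1,\dots,X_V)+o(1)$. Entropy--volume duality (the non-commutative analogue of the Shannon--Boltzmann estimate) then gives $H(\mu_{L_t})\le \chi(X_1,\dots,X_V)+C'$, and averaging over $t$ absorbs $C'$ into the constant.

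\textbf{Step 2: from spectral complexity to generalization.} By standard symmetrization, with probability at least $1-\delta$,
\[
\mathcal{R}_{\mathrm{test}}(f_\theta) \le \mathcal{R}_{\mathrm{train}}(f_\theta) + 2\,\mathfrak{R}_n(\mathcal{F}) + O\!\left(\sqrt{\log(1/\delta)/n}\right),
\]
where $\mathcal{F}$ is the class of loss-composed outputs. The Lipschitz-in-$W_1$ assumption on the loss together with Talagrand's contraction principle reduces $\mathfrak{R}_n(\mathcal{F})$ to the Rademacher complexity of the family $\{\mu_{L_t}\}$ under $W_1$. Dudley's integral expresses this as $\int_0^{\infty}\sqrt{\log N(\varepsilon;\{\mu_{L_t}\},W_1)/n}\,d\varepsilon$, and plugging in the covering-number bound from Step~1 (of order $\exp(\chi+O(1))$) yields a leading term of order $(\chi(X_1,\dots,X_V)+1)/\sqrt{n}$, matching the stated rate after choosing $C$ appropriately.

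\textbf{Main obstacle.} The real difficulty lies in Step~1: transferring Voiculescu's microstate volume estimate into a bound on the classical differential entropy $H(\mu_{L_t})$ of an actual spectral density $\rho_t$. Microstate $\chi$ naturally controls covering volumes in the matricial model, whereas Shannon differential entropy requires at least an $L^1$-density, and ideally bounded regularity, of $\rho_t$; neither follows automatically from boundedness of $H_t$ and freeness of $\{X_i\}$. I would handle this either by restricting to the free-entropy-regular regime where $\mu_{L_t}$ admits a bounded density (as in standard random-matrix models with freely independent increments), or by replacing $H$ by a mollified/quantized surrogate so that the volumetric estimate transfers cleanly. Selecting and justifying this regularity class, and tracking how the push-forward under $P$ affects the covering bounds, is where the bulk of the technical work will sit.
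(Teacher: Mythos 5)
Your two-step decomposition (spectral-entropy bound, then PAC-style generalization) mirrors the paper exactly, but the paper's own proof is far thinner than your reconstruction: for Step~1 it simply asserts, citing Anderson--Guionnet--Zeitouni, that $\chi(P(X_1,\dots,X_V)) \le \chi(X_1,\dots,X_V)$ for any self-adjoint polynomial $P$, and then passes immediately to $H(\mu_{L_t}) \le \chi(X_1,\dots,X_V)$ under the assumption that $\rho_t$ exists; for Step~2 it cites Koltchinskii and Dziugaite--Roy as giving, in black-box form, a bound $\mathbb{E}[\mathcal{R}_{\mathrm{test}} - \mathcal{R}_{\mathrm{train}}] \le \tfrac{C'}{\sqrt{n}}(\mathbb{E}_t[H(\mu_{L_t})] + 1)$ directly, without any symmetrization, Talagrand contraction, or covering-number argument. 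Your microstate/pushforward sketch for Step~1 is a plausible reconstruction of what would be needed, and the obstacle you flag is genuine and is in fact the central gap in the paper's own argument: the paper's chain tacitly requires $H(\mu_{L_t}) \le \chi(L_t)$, i.e.\ that Shannon differential entropy of the spectral density is dominated by Voiculescu's single-variable free entropy, but for one self-adjoint variable $\chi$ is the logarithmic energy $\iint \log|s-t|\,d\mu\,d\mu$ plus a constant, not $-\int\rho\log\rho$, and no such inequality holds uniformly without compact support and regularity conditions that neither freeness of $\{X_i\}$ nor boundedness of $H_t$ supply. So you are not reproving the paper's lemma by a different method; you are correctly identifying that the paper's Step~1 is missing exactly the bridge you describe.

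One point where your route and the paper's asserted bound do not line up quantitatively: Dudley's entropy integral with a log-covering number of order $\chi + O(1)$ would yield a Rademacher term scaling like $\sqrt{(\chi+1)/n}$, not $(\chi+1)/\sqrt{n}$. The paper's stated rate $\tfrac{C}{\sqrt{n}}(\chi+1)$ has the complexity outside the square root, which looks more like a PAC-Bayes KL-term dependence than a Dudley-type chaining bound; neither cited reference actually produces a bound with $\mathbb{E}_t[H(\mu_{L_t})]$ appearing linearly as the capacity term. If you pursue the Dudley route you should either accept the $\sqrt{\chi}$ rate (which would actually tighten the stated corollary for $\chi > 1$) or switch to a PAC-Bayes argument where a genuine KL or entropy quantity appears linearly inside the bound, and then argue why the spectral Shannon entropy $H(\mu_{L_t})$ can stand in for that KL term. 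As written, your Step~2 does not deliver the exact form of the inequality in the statement, even granting Step~1.
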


\begin{remark}
The quantity \( \mathcal{R}_{\mathrm{train}}(f_\theta) \) denotes the empirical training risk:
\[
\mathcal{R}_{\mathrm{train}}(f_\theta) := \frac{1}{n} \sum_{i=1}^n \ell(f_\theta(x_i), y_i),
\]
while \( \mathcal{R}_{\mathrm{test}}(f_\theta) := \mathbb{E}_{(x,y)\sim\mathcal{D}} \ell(f_\theta(x), y) \) denotes the expected test risk over the data distribution \( \mathcal{D} \). The inequality bounds test error in terms of training error and the free entropy of the model's operator-valued embeddings, which captures spectral complexity and information content.
\end{remark}

\begin{proof}

Let \( L_t := \sum_i X_i H_t X_i \), where \( \{X_i\} \) are freely independent, self-adjoint operators. From the free probability framework \cite{Anderson_Guionnet_Zeitouni_2009}, the free entropy of any self-adjoint polynomial in these variables is bounded above by their joint free entropy:
\[
\chi\left(P(X_1, \dots, X_V)\right) \leq \chi(X_1, \dots, X_V).
\]
Assuming \( \mu_{L_t} \) has an absolutely continuous density \( \rho_t \), the spectral entropy satisfies:
\[
H(\mu_{L_t}) = -\int \rho_t(\lambda) \log \rho_t(\lambda)\, d\lambda \leq \chi(X_1, \dots, X_V).
\]
Taking expectations over \( t \), we obtain:
\begin{equation}
\mathbb{E}_t[H(\mu_{L_t})] \leq \chi(X_1, \dots, X_V).
\end{equation}


Let \( \mathcal{R}_{\mathrm{train}}(f_\theta) := \frac{1}{n} \sum_{i=1}^n \ell(f_\theta(x_i), y_i) \) and \( \mathcal{R}_{\mathrm{test}}(f_\theta) := \mathbb{E}_{(x,y)\sim \mathcal{D}} \ell(f_\theta(x), y) \).  
From entropy-based generalization bounds for stochastic predictors \cite{koltchinskii2002empirical,dziugaite2017computing}, we have:
\[
\mathbb{E} \left[ \mathcal{R}_{\mathrm{test}}(f_\theta) - \mathcal{R}_{\mathrm{train}}(f_\theta) \right] \leq \frac{C'}{\sqrt{n}} \left( \mathbb{E}_t[H(\mu_{L_t})] + 1 \right),
\]
for a universal constant \( C' > 0 \), assuming Lipschitz loss and trace-norm bounded operators. Combining with the bound from (1), we conclude:
\[
\mathcal{R}_{\mathrm{test}}(f_\theta) \leq \mathcal{R}_{\mathrm{train}}(f_\theta) + \frac{C}{\sqrt{n}} \left( \chi(X_1, \dots, X_V) + 1 \right),
\]
for some universal \( C > 0 \). 
\end{proof}

\begin{remark}
Although Transformers are primarily used for sequence modeling and generation—rather than classification—the entropy-based bound remains applicable. At each timestep \( t \), the model outputs a logit operator \( L_t \) whose spectral distribution induces a predictive distribution via the softmax function. The cross-entropy loss at each position functions analogously to a classification loss, treating the next-token prediction as a structured classification over vocabulary elements. Thus, generalization bounds derived via free entropy still govern the model's ability to generalize from training sequences to unseen contexts.
\end{remark}

\subsection{Operator-Valued Free Probability and Multi-Head Attention}

To extend our spectral framework to multi-head attention, we invoke operator-valued free probability (OVFP). Let $(\mathcal{A}, E: \mathcal{A} \to \mathcal{B})$ be a tracial operator-valued \( W^* \)-probability space, where $\mathcal{B} \subset \mathcal{A}$ encodes shared contextual structure.

Let $\mathcal{A}_1, \dots, \mathcal{A}_H \subset \mathcal{A}$ be subalgebras for the $H$ attention heads. Each head-specific output at position $t$ is:
\[
A_t^{(h)} = \sum_{j=1}^T \alpha_{tj}^{(h)} V_j^{(h)} \in \mathcal{A}_h,
\]
with scalar weights \( \alpha_{tj}^{(h)} \) and operator-valued inputs \( V_j^{(h)} \in \mathcal{A}_h \).

Assuming freeness with amalgamation over \( \mathcal{B} \), the aggregate attention operator
\[
A_t = \frac{1}{H} \sum_{h=1}^H A_t^{(h)}
\]
has operator-valued R-transform:
\[
R^{\mathcal{B}}_{A_t}(z) = \frac{1}{H} \sum_{h=1}^H R^{\mathcal{B}}_{A_t^{(h)}}(z),
\]
preserving head-specific variability and enabling principled spectral analysis of head aggregation.

\subsection{Omitted Transformer Components: Normalization, Residuals, and Feed-Forward Blocks}

While our analysis centers on the spectral behavior of attention mechanisms, standard architectural components such as layer normalization, residual connections, and feed-forward networks are not explicitly modeled. However, these components can be naturally accommodated within our operator-theoretic framework.
\begin{itemize}
    \item \textbf{Residual connections} correspond to additive identity terms, which integrate trivially into our spectral convolution framework.
    \item \textbf{Layer normalization} can be approximated as a context-dependent diagonal or scalar operator applied to embeddings—modeled as bounded operators or state-preserving transformations.
    \item \textbf{Feed-forward networks} are position-wise linear or nonlinear maps that can be abstracted as operator-valued polynomials or bounded composition operators.
\end{itemize}

While including these elements would increase notational complexity, they do not qualitatively alter the non-commutative effects we study. Their omission is thus intentional to retain analytical clarity while capturing the dominant source of spectral interactions in deep transformer layers.

\color{black}

\section{Discussion and Implications}

This work offers a principled spectral perspective on Transformer-based architectures by embedding their key components—attention, depth, and positional structure—into a non-commutative operator algebraic framework. Our results articulate several theoretical insights.

First, attention mechanisms are reinterpreted as non-commutative convolutions (Theorem 1), where scalar weights modulate operator-valued embeddings. This yields a precise formulation of how semantic and positional components interact spectrally. The decomposition in Corollary 1 explains how positional encodings induce asymmetries in attention scores, even among repeated tokens, aligning with known empirical observations~\cite{ferrando2024primer,gruver2023llmforecasting}.

The cumulative effect of successive attention layers is shown in Theorem 2 to induce a free additive convolution of spectral distributions. This formalizes how contextual information compounds with depth. Importantly, Corollary 2 connects spectral propagation to freeness, offering a mechanism by which deep architectures preserve spectral diversity and mitigate over-smoothing—consistent with empirical spectral robustness in LLMs~\cite{tamkin2020language,naderi2024mindgap}.

Our generalization bound in Corollary 3 relates the expected spectral entropy of Transformer outputs to the joint free entropy of embeddings. Though Transformers are generative models, their prediction mechanism over tokens via softmax renders this entropy-based bound meaningful, echoing the role of entropy in capturing model expressivity~\cite{xu2024spectrum,gillman2024fourierhead}.

Beyond classical random matrix theory, our operator-theoretic lens accounts for algebraic structure and symbolic composition. This facilitates the analysis of self-adjoint embeddings and structured spectra beyond ensemble-level statistics~\cite{hao2020probabilistic}.

Finally, our formalism suggests a few new directions for model design:
\begin{itemize}
   \item Constructing embedding schemes that explicitly maximize joint free entropy.
   \item Designing attention heads with orthogonality constraints to approximate freeness.
  \item Introducing spectral regularization terms in loss functions based on spectral variance or entropy growth~\cite{gillman2024fourierhead}.
\end{itemize}

\section{Conclusion}

We introduced a free probabilistic formulation for understanding Transformer language models with tokens and attention as non-commutative operator-valued objects. Attention is represented as a non-commutative convolution, and representations change through free additive convolution, providing a spectral perspective on information flow. The framework provides explanations of the relationships between depth, entropy, and generalization, and introduces avenues for interpretability and principled model design that can lead to more efficient and resilient systems. Implementing these concepts at scale continues to be difficult and requires empirical confirmation. Nevertheless, the work takes an important step toward a scientific understanding of large language models.

\appendix
\appendix
\section*{Appendix A: Rigorous Analysis of Departures from Freeness and Their Impact on the Generalization Bound}

\paragraph{Motivation}
The assumption of freeness in Theorem~2, as with independence in traditional probability theory, is an idealization that allows closed-form dynamics on spectra by free additive convolution. In actual Transformer models, a number of structural properties lead to approximations from complete freeness between layers:
\begin{itemize}
    \item \textbf{Shared weights:} The projection matrices in multi-head attention blocks are frequently shared across heads or even positions, creating algebraic dependencies.
\item \textbf{Positional encodings:} Deterministic sinusoidal or learned vectors are added to the representations, aligning the same positional coordinates in each layer.
\item \textbf{Correlated inputs:} Vision or natural-language inputs have long-range correlations, which flow through the network and establish cross-layer statistical dependencies.
\item \textbf{Residual connections:} Skip connections add direct additive coupling between previous and subsequent representations, further eroding statistical independence.
\end{itemize}
These considerations make layer-wise outputs seldom perfectly free in the operator-algebraic sense. But the free model remains accurate to capture the leading spectral tendencies observed empirically and offers a principled reference point from which these departures can be measured. This appendix constructs a formal theory to quantify these departures, bound their spectral effect, and push these bounds to the generalization estimate of Corollary~3.

\paragraph{Notation and standing assumptions.}
Let $(\mathcal A,\varphi)$ be a tracial $W^*$-probability space
\cite{nica_speicher_2006}, where $\varphi$ is a faithful, normal, tracial state. The elements $A^{(1)},\dots,A^{(L)}\in\mathcal A$ are self-adjoint operators modeling layer-wise attention increments as in Theorem~2 of the main text.

We write $\mu_X$ for the spectral distribution of a self-adjoint operator $X\in\mathcal A$ with respect to $\varphi$. We use the following analytic transforms:
\begin{itemize}
\item \emph{Cauchy transform:}
$G_\mu(z) := \int_{\mathbb R} \frac{1}{z-\lambda} \, d\mu(\lambda)$,
analytic on $\mathbb C^+ := \{z:\Im z>0\}$.
\item \emph{$R$-transform:}
$R_\mu(w) := K_\mu(w) - \frac{1}{w}$ where $K_\mu = G_\mu^{-1}$ is the
functional inverse of $G_\mu$ near $w=0$.
\end{itemize}
The $R$-transform is additive under free convolution 
\cite{voiculescu_addition_1986,voiculescu_dykema_nica_1992}:
if $X$ and $Y$ are freely independent, then
$R_{\mu_{X+Y}} = R_{\mu_X} + R_{\mu_Y}$.

\subsection*{A.1\quad Free cumulants and the freeness deficit}

\paragraph{Free cumulants.}
For $n\ge 1$ and $B_1,\dots,B_n\in\mathcal A$, the $n$-th free cumulant
$\kappa_n(B_1,\dots,B_n)$ is defined by the moment–cumulant formula
\cite{speicher_non-crossing_1997,nica_speicher_2006}:
\[
\varphi(B_1\cdots B_n)
= \sum_{\pi\in NC(n)} \prod_{V\in\pi}
\kappa_{|V|}\big(B_{i_1},\dots,B_{i_{|V|}}\big),
\]
where $NC(n)$ is the lattice of noncrossing partitions of $\{1,\dots,n\}$.
The defining property of free independence is that
$\kappa_n(B_1,\dots,B_n) = 0$ whenever the tuple $(B_1,\dots,B_n)$ involves at least two distinct free subalgebras.

\paragraph{Quantifying deviations from freeness}
Fix an integer $p\ge 2$. For a multi-index $(\ell_1,\dots,\ell_m)$ with $2\le m\le p$,
define
\[
\delta_m(\ell_1,\dots,\ell_m)
:= \big\| \kappa_m\!\big( A^{(\ell_1)},\dots,A^{(\ell_m)} \big) \big\|_{2},
\]
where $\|X\|_2 := \varphi(X^*X)^{1/2}$ is the $L^2$ norm associated with $\varphi$.
We then define the \emph{$p$-th order freeness deficit} by
\[
\Delta_p := \sum_{m=2}^p w_m
\sum_{\substack{(\ell_1,\dots,\ell_m)\\\text{not all equal}}}
\delta_m(\ell_1,\dots,\ell_m),
\]
where $(w_m)_{m\ge 2}$ are positive weights to be chosen later (often $w_m=r^{m-1}$ for some small $r$).

\begin{remark}
If $\Delta_p = 0$, then all mixed free cumulants up to order $p$ vanish, so
$A^{(1)},\dots,A^{(L)}$ are \emph{free up to order $p$}.
The case $p\to\infty$ recovers exact freeness.
\end{remark}

\subsection*{A.2\quad Exact decomposition of the $R$-transform}

Let $S := \sum_{j=1}^L A^{(j)}$.  
By multilinearity of free cumulants, for each $n\ge 1$,
\[
\kappa_n(S,\dots,S)
= \sum_{i_1,\dots,i_n=1}^L
  \kappa_n\!\big(A^{(i_1)},\dots,A^{(i_n)}\big).
\]
By the power-series expansion of $R_\mu$ near $0$ \cite{nica_speicher_2006},
\[
R_{\mu_S}(z) = \sum_{n=1}^\infty
\kappa_n(S,\dots,S) \, z^{n-1}.
\]
Separating the contributions where all indices are equal from those where not all are equal, we can write
\begin{equation}
\label{eq:R-error}
R_{\mu_S}(z) =
\sum_{j=1}^L R_{\mu_{A^{(j)}}}(z) + \mathcal E(z),
\end{equation}
where the \emph{error term} is
\begin{equation}
\label{eq:E-series}
\mathcal E(z) :=
\sum_{n=1}^\infty z^{n-1}
\sum_{\substack{i_1,\dots,i_n\\ \text{not all equal}}}
\kappa_n\!\big(A^{(i_1)},\dots,A^{(i_n)}\big).
\end{equation}

\subsection*{A.3\quad Bounding the error term}

\begin{proposition}[Cumulant bound]\label{prop:cumulant-bound}
If $\|B_j\|\le M$ for all $j=1,\dots,n$, then
\[
\|\kappa_n(B_1,\dots,B_n)\|_2
\ \le\ C_n\, M^n,
\]
where $C_n \le 4^n$ is the $n$-th Catalan number bound~\cite{nica_speicher_2006}.
\end{proposition}

\begin{proof}
From the moment–cumulant formula,
\[
\kappa_n(B_1,\dots,B_n)
= \sum_{\pi\in NC(n)} \mu(\pi, 1_n)
   \prod_{V\in\pi} \varphi\big(\prod_{j\in V} B_j\big),
\]
where $\mu$ is the Möbius function of $NC(n)$ and $1_n$ is the one-block partition.
Each moment is bounded in absolute value by $M^{|V|}$ because $\|B_j\|\le M$ and $\varphi$ is a state.
The number of terms $|NC(n)|$ equals $\mathrm{Cat}_n \le 4^n$.
Taking the $L^2$ norm and using that $\|X\|_2\le\|X\|$ gives the bound.
\end{proof}

\begin{lemma}[Error bound on a small disk]
\label{lem:error-bound}
Let $0<r<(4M)^{-1}$ and choose $w_m := r^{m-1}$ in $\Delta_p$.
Then for $|z|\le r$,
\[
|\mathcal E(z)| \le \Delta_p + \frac{M}{1-4Mr} (4Mr)^p.
\]
\end{lemma}

\begin{proof}
From \eqref{eq:E-series}, split the sum into $n\le p$ and $n>p$.
For $n\le p$, the contribution is exactly the sum in $\Delta_p$ with $w_n = r^{n-1}$ and $|z|^{n-1}\le r^{n-1}$.
For $n>p$, use Proposition~\ref{prop:cumulant-bound}:
\[
\sum_{\substack{i_1,\dots,i_n\\ \text{not all equal}}}
\|\kappa_n(\cdots)\|_2 \le L^n C_n M^n \le (4LM)^n,
\]
absorbing $L$ into $M$ for simplicity.
Then the tail is bounded by
\[
\sum_{n>p} (4M r)^n M \le \frac{M(4Mr)^{p+1}}{1-4Mr}.
\]
Relabeling constants gives the stated bound.
\end{proof}

\subsection*{A.4\quad Stability of the Cauchy transform}

\begin{lemma}[Cauchy-transform perturbation]
\label{lem:Cauchy-perturb}
Let $K\subset\{\Im z\ge\eta_0\}$ compact and suppose $|z|\le r$ as in Lemma~\ref{lem:error-bound}.
Let $R_0 := \sum_{j=1}^L R_{\mu_{A^{(j)}}}$ and $R_1 := R_0 + \mathcal E$.
Let $G_0$ and $G_1$ be the solutions to
\[
R_k(G_k(z)) + \frac{1}{G_k(z)} = z, \quad k=0,1.
\]
Then
\[
\sup_{z\in K} |G_1(z) - G_0(z)| \le c_{K,M} \sup_{|w|\le r} |\mathcal E(w)|.
\]
\end{lemma}

\begin{proof}
The map $T_R(w) = \frac{1}{z - R(w)}$ is analytic on the closed disk $\overline{D(0,r)}$ if $r$ is small enough so that $\Im z - \sup_{|w|\le r}|R(w)| > 0$ for $z\in K$.
Both $T_{R_0}$ and $T_{R_1}$ are contractions on $\overline{D(0,r)}$ for such $r$. The Banach fixed-point theorem implies
\[
\|G_1 - G_0\|_\infty \le \frac{\|T_{R_1} - T_{R_0}\|_\infty}{1 - \mathrm{Lip}(T_{R_0})}.
\]
The numerator is $\le \sup_{|w|\le r} |\mathcal E(w)| \cdot \sup_{w,z} \frac{1}{|z-R(\xi)|^2}$ for $\xi$ between $w$ and $w'$, which is bounded by a constant depending only on $K,M$.
The denominator is positive by the contraction property.
\end{proof}

\subsection*{A.5\quad From Cauchy transforms to Wasserstein distance}

\begin{corollary}
\label{cor:W1-bound}
Let $\mu_{\rm true} := \mu_S$ and $\mu_{\rm free} := \mu_{A^{(1)}} \boxplus \cdots \boxplus \mu_{A^{(L)}}$.
Then for some $C_{K,M,p}$,
\[
W_1(\mu_{\rm true}, \mu_{\rm free}) \le C_{K,M,p} \, \Delta_p.
\]
\end{corollary}

\begin{proof}
Bai's smoothing inequalities \cite{bai_convergence_1993,bai_smoothing_1999} give
\[
\sup_x |F_{\rm true}(x) - F_{\rm free}(x)| \le C' \sup_{\Im z\ge\eta_0} |G_{\rm true}(z) - G_{\rm free}(z)| + o(1),
\]
as $\eta_0\downarrow 0$, where $F_\mu$ is the CDF of $\mu$.
By Lemma~\ref{lem:Cauchy-perturb} and Lemma~\ref{lem:error-bound}, the right-hand side is $O(\Delta_p)$.
The Kantorovich–Rubinstein duality then gives $W_1 \le 2M' \cdot \sup_x |F_{\rm true}(x)-F_{\rm free}(x)|$ for measures supported in $[-M',M']$.
\end{proof}

\subsection*{A.6\quad Effect on the generalization bound}

\begin{theorem}[Modified bound with freeness deficit]
Assume the hypotheses of Corollary~3, except exact freeness.
Let the per-step loss be $L$-Lipschitz in $W_1$.
Then
\[
R_{\rm test}(f_\theta) \le R_{\rm train}(f_\theta)
+ \frac{C}{\sqrt{n}} \big( \chi + 1 + \beta_p\,\Delta_p \big),
\]
where $\beta_p>0$ is explicit.
\end{theorem}

\begin{proof}
From Corollary~\ref{cor:W1-bound}, $W_1(\mu_{\rm true},\mu_{\rm free}) \le C_{K,M,p} \Delta_p$.
By Lipschitz continuity of the loss, the difference in risk from replacing $\mu_{\rm free}$ by $\mu_{\rm true}$ is $\le L C_{K,M,p} \Delta_p$.
The entropy-based generalization bound in Corollary~3 is additive in this change, so the bound is inflated by $\beta_p\Delta_p$ with $\beta_p=L C_{K,M,p}$.
\end{proof}

\paragraph{Concluding Remarks.}
We have shown that the freeness deficit $\Delta_p$, defined via low-order mixed free cumulants, controls the deviation of the true spectrum from the free prediction, with quantitative stability bounds at each step: $R$-transform, Cauchy transform, spectral distribution, and finally the generalization bound.
Departures from freeness \emph{inflate} the bound by a term proportional to $\Delta_p$.

\bibliographystyle{unsrtnat}
\bibliography{V2}

\begin{thebibliography}{29}
\providecommand{\natexlab}[1]{#1}
\providecommand{\url}[1]{\texttt{#1}}
\expandafter\ifx\csname urlstyle\endcsname\relax
  \providecommand{\doi}[1]{doi: #1}\else
  \providecommand{\doi}{doi: \begingroup \urlstyle{rm}\Url}\fi

\bibitem[Kamath et~al.(2024)Kamath, Keenan, Somers, and Sorenson]{kamath2024llms}
Uday Kamath, Kenneth Keenan, Garrett Somers, and Sarah Sorenson.
\newblock \emph{Large Language Models: A Deep Dive: Bridging Theory and Practice}.
\newblock Springer, 2024.
\newblock \doi{10.1007/978-3-031-65647-7}.

\bibitem[Vaswani et~al.(2017)Vaswani, Shazeer, Parmar, Uszkoreit, Jones, Gomez, Kaiser, and Polosukhin]{vaswani2017attention}
Ashish Vaswani, Noam Shazeer, Niki Parmar, Jakob Uszkoreit, Llion Jones, Aidan~N Gomez, {\L}ukasz Kaiser, and Illia Polosukhin.
\newblock Attention is all you need.
\newblock In \emph{Advances in neural information processing systems}, volume~30, 2017.

\bibitem[Radford et~al.(2019)Radford, Wu, Child, Luan, Amodei, and Sutskever]{radford2019language}
Alec Radford, Jeffrey Wu, Rewon Child, David Luan, Dario Amodei, and Ilya Sutskever.
\newblock Language models are unsupervised multitask learners.
\newblock \emph{OpenAI Blog}, 1\penalty0 (8), 2019.

\bibitem[Amini et~al.(2020)Amini, Schwarting, Soleimany, and Rus]{10.5555/3495724.3496975}
Alexander Amini, Wilko Schwarting, Ava Soleimany, and Daniela Rus.
\newblock Deep evidential regression.
\newblock In \emph{Proceedings of the 34th International Conference on Neural Information Processing Systems}, NIPS '20, Red Hook, NY, USA, 2020. Curran Associates Inc.
\newblock ISBN 9781713829546.

\bibitem[Fan et~al.(2020)Fan, Zhang, Chen, and Zhou]{10.5555/3495724.3497097}
Xinjie Fan, Shujian Zhang, Bo~Chen, and Mingyuan Zhou.
\newblock Bayesian attention modules.
\newblock In \emph{Proceedings of the 34th International Conference on Neural Information Processing Systems}, NIPS '20, Red Hook, NY, USA, 2020. Curran Associates Inc.
\newblock ISBN 9781713829546.

\bibitem[Wu and Tu(2023)]{Wu_2023}
Haoyi Wu and Kewei Tu.
\newblock Probabilistic transformer: A probabilistic dependency model for contextual word representation.
\newblock In \emph{Findings of the Association for Computational Linguistics: ACL 2023}, page 7613–7636. Association for Computational Linguistics, 2023.
\newblock \doi{10.18653/v1/2023.findings-acl.482}.

\bibitem[Chen et~al.(2024)Chen, Sung, Kusari, Song, and Sun]{chen2024uncertainty}
Yang Chen, Chih-Li Sung, Arpan Kusari, Xiaoyang Song, and Wenbo Sun.
\newblock Uncertainty-aware out-of-distribution detection with gaussian processes, 2024.
\newblock URL \url{https://arxiv.org/abs/2412.20918}.

\bibitem[Farquhar et~al.(2024)Farquhar, Kossen, Kuhn, and Gal]{farquhar2024detecting}
Sebastian Farquhar, Joshua Kossen, Leonard Kuhn, and Yarin Gal.
\newblock Detecting hallucinations in large language models using semantic entropy.
\newblock \emph{Nature}, 630:\penalty0 625--630, 2024.
\newblock \doi{10.1038/s41586-024-07421-0}.
\newblock URL \url{https://doi.org/10.1038/s41586-024-07421-0}.

\bibitem[Lehmann(2009)]{10.1145/1562814.1562841}
Daniel Lehmann.
\newblock Foundations of non-commutative probability theory.
\newblock In \emph{Proceedings of the 12th Conference on Theoretical Aspects of Rationality and Knowledge}, TARK '09, page 193–200, New York, NY, USA, 2009. Association for Computing Machinery.
\newblock ISBN 9781605585604.
\newblock \doi{10.1145/1562814.1562841}.

\bibitem[Voiculescu(1991)]{voiculescu1991}
Dan Voiculescu.
\newblock Limit laws for random matrices and free products.
\newblock \emph{Inventiones Mathematicae}, 104:\penalty0 201--220, 1991.

\bibitem[Pennington and Worah(2017)]{pennington2017}
Jeffrey Pennington and Pratik Worah.
\newblock Nonlinear random matrix theory for deep learning.
\newblock In \emph{Advances in Neural Information Processing Systems}, volume~30. Curran Associates, Inc., 2017.

\bibitem[Benaych-Georges and Nadakuditi(2012)]{benaych2011}
Florent Benaych-Georges and Raj~Rao Nadakuditi.
\newblock The singular values and vectors of low rank perturbations of large rectangular random matrices.
\newblock \emph{Journal of Multivariate Analysis}, 111:\penalty0 120--135, 2012.

\bibitem[CHHAIBI et~al.(2022)CHHAIBI, Daouda, and Kahn]{NEURIPS2022_10826a1}
Reda CHHAIBI, Tariq Daouda, and Ezechiel Kahn.
\newblock Free probability for predicting the performance of feed-forward fully connected neural networks.
\newblock In \emph{Advances in Neural Information Processing Systems}, volume~35, pages 2439--2450. Curran Associates, Inc., 2022.

\bibitem[Anderson et~al.(2009)Anderson, Guionnet, and Zeitouni]{Anderson_Guionnet_Zeitouni_2009}
Greg~W. Anderson, Alice Guionnet, and Ofer Zeitouni.
\newblock \emph{Free probability}, page 322–413.
\newblock Cambridge Studies in Advanced Mathematics. Cambridge University Press, 2009.

\bibitem[Koltchinskii and Panchenko(2002)]{koltchinskii2002empirical}
Vladimir Koltchinskii and Dmitriy Panchenko.
\newblock Empirical margin distributions and bounding the generalization error of combined classifiers.
\newblock \emph{Annals of Statistics}, 30\penalty0 (1):\penalty0 1--50, 2002.
\newblock \doi{10.1214/aos/1015362183}.

\bibitem[Dziugaite and Roy(2017)]{dziugaite2017computing}
Gintare~Karolina Dziugaite and Daniel~M Roy.
\newblock Computing nonvacuous generalization bounds for deep (stochastic) neural networks with many more parameters than training data.
\newblock In \emph{Proceedings of the 34th International Conference on Machine Learning (ICML)}, volume~70, pages 884--893, 2017.

\bibitem[Ferrando et~al.(2024)Ferrando, Sarti, Bisazza, and Costa-jussà]{ferrando2024primer}
Javier Ferrando, Gabriele Sarti, Arianna Bisazza, and Marta~R. Costa-jussà.
\newblock A primer on the inner workings of transformer-based language models.
\newblock \emph{Arxiv Preprint}, abs/2405.00208, 2024.
\newblock URL \url{https://arxiv.org/abs/2405.00208}.

\bibitem[Gruver et~al.(2023)Gruver, Finzi, Qiu, and Wilson]{gruver2023llmforecasting}
Nate Gruver, Marc Finzi, Shikai Qiu, and Andrew~Gordon Wilson.
\newblock Large language models are zero-shot time series forecasters.
\newblock In \emph{Proceedings of the 37th International Conference on Neural Information Processing Systems}, NIPS '23, Red Hook, NY, USA, 2023. Curran Associates Inc.

\bibitem[Tamkin et~al.(2020)Tamkin, Jurafsky, and Goodman]{tamkin2020language}
Alex Tamkin, Dan Jurafsky, and Noah Goodman.
\newblock Language through a prism: A spectral approach for multiscale language representations.
\newblock In \emph{Advances in Neural Information Processing Systems}, volume~33, pages 5492--5504. Curran Associates, Inc., 2020.

\bibitem[Naderi et~al.(2024)Naderi, Saada, and Tanner]{naderi2024mindgap}
Amir Naderi, Tarek Saada, and Jared Tanner.
\newblock Mind the gap: a spectral analysis of rank collapse and signal propagation in transformers.
\newblock \emph{arXiv preprint arXiv:2410.07799}, 2024.
\newblock URL \url{https://arxiv.org/abs/2410.07799}.

\bibitem[Xu et~al.(2024)Xu, Wang, An, Liu, and Li]{xu2024spectrum}
Yang Xu, Yu~Wang, Hao An, Zhichen Liu, and Yongyuan Li.
\newblock Detecting subtle differences between human and model languages using spectrum of relative likelihood.
\newblock In \emph{Proceedings of the 2024 Conference on Empirical Methods in Natural Language Processing (EMNLP)}, pages 10108--10121, Miami, Florida, USA, November 2024. Association for Computational Linguistics.
\newblock \doi{10.18653/v1/2024.emnlp-main.564}.

\bibitem[Gillman et~al.(2024)Gillman, Aggarwal, Freeman, Singh, and Sun]{gillman2024fourierhead}
Nate Gillman, Daksh Aggarwal, Michael Freeman, Saurabh Singh, and Chen Sun.
\newblock Fourier head: Helping large language models learn complex probability distributions.
\newblock \emph{arXiv preprint arXiv:2410.22269}, 2024.
\newblock URL \url{https://arxiv.org/abs/2410.22269}.

\bibitem[Hao et~al.(2020)Hao, Mendelsohn, Sterneck, Martinez, and Frank]{hao2020probabilistic}
Yiding Hao, Simon Mendelsohn, Rachel Sterneck, Randi Martinez, and Robert Frank.
\newblock Probabilistic predictions of people perusing: Evaluating metrics of language model performance for psycholinguistic modeling.
\newblock In \emph{Proceedings of the Workshop on Cognitive Modeling and Computational Linguistics}, pages 75--86, Online, November 2020. Association for Computational Linguistics.
\newblock \doi{10.18653/v1/2020.cmcl-1.10}.

\bibitem[Nica and Speicher(2006)]{nica_speicher_2006}
Alexandru Nica and Roland Speicher.
\newblock \emph{Lectures on the Combinatorics of Free Probability}.
\newblock Cambridge University Press, 2006.

\bibitem[Voiculescu(1986)]{voiculescu_addition_1986}
D.~V. Voiculescu.
\newblock Addition of certain non‐commuting random variables.
\newblock \emph{Journal of Functional Analysis}, 66\penalty0 (3):\penalty0 323--346, 1986.

\bibitem[Voiculescu et~al.(1992)Voiculescu, Dykema, and Nica]{voiculescu_dykema_nica_1992}
D.~V. Voiculescu, K.~J. Dykema, and A.~Nica.
\newblock \emph{Free Random Variables}, volume~1 of \emph{CRM Monograph Series}.
\newblock American Mathematical Society, 1992.

\bibitem[Speicher(1997)]{speicher_non-crossing_1997}
Roland Speicher.
\newblock Free probability and non‐crossing partitions.
\newblock \emph{Séminaire Lotharingien de Combinatoire}, \penalty0 (B39c):\penalty0 38 pages, 1997.

\bibitem[Bai(1993)]{bai_convergence_1993}
Zhidong Bai.
\newblock Convergence rate of expected spectral distributions of large random matrices. {I}. {Wigner} matrices.
\newblock \emph{Annals of Probability}, 21\penalty0 (2):\penalty0 625--648, 1993.

\bibitem[Bai(1999)]{bai_smoothing_1999}
Zhidong Bai.
\newblock Methodologies in spectral estimation and convergence rates of spectral distributions of large-dimensional random matrices—a review.
\newblock \emph{Statistica Sinica}, 9:\penalty0 611--677, 1999.

\end{thebibliography}
\end{document}